\tikzstyle{none}=[inner sep=0pt]
\tikzstyle{invisible}=[inner sep=0pt]
\tikzstyle{rn}=[circle,fill=Red,draw=Black,line width=0.8 pt]
\tikzstyle{gn}=[circle,fill=White,draw=Black,line width=0.8 pt]
\tikzstyle{yn}=[circle,fill=Yellow,draw=Black,line width=0.8 pt]
\tikzstyle{simple}=[circle,fill=White,draw=Black]
\tikzstyle{newstyle1}=[circle,fill=Black,draw=Black,line width=0.3 pt,inner sep=0pt]
\tikzstyle{simple2}=[-,dashed,draw=Black]
\tikzstyle{simpledotted}=[-,dotted,draw=Black]
\tikzstyle{simple}=[-,draw=Black,line width=2.000]
\tikzstyle{new}=[-,draw=Black,line width=2.000]
\tikzstyle{arrow}=[-,draw=Black,postaction={decorate},decoration={markings,mark=at position .5 with {\arrow{>}}},line width=2.000]
\tikzstyle{tick}=[-,draw=Black,postaction={decorate},decoration={markings,mark=at position .5 with {\draw (0,-0.1) -- (0,0.1);}},line width=2.000]
\tikzstyle{newstyle2}=[-latex,draw=Black]
\tikzstyle{newstyle3}=[->,dotted,draw=Black]
\tikzstyle{newstyle6}=[->,dotted,draw=Black]
\definecolor{Black}  {RGB}{0,0,0}
\def\rConstrHelper(#1,#2,#3,#4){{#1}\ \substack{#2 \\ #3}\ {#4}}
\newcommand{\prob}{\ensuremath{\Dmc}\xspace}
\newcommand{\poly}{\ensuremath{\mathit{p}}\xspace}
\newcommand{\sss}{{\mathrel{\kern.25em{\sqsubseteq}\kern-.5em \mbox{{\scriptsize *}}\kern.25em}}}
\newcommand{\smallsss}{{\mathrel{\kern.25em{\sqsubseteq}\kern-.4em \mbox{{\scriptsize *}}\kern.25em}}}
\newcommand{\node}{\ensuremath{\mathop{\mathsf{node}}}}
\newcommand{\ELHlhs}{\ensuremath{{\cal E\!LH}_{lhs}}\xspace}
\newcommand{\ELHrhs}{\ensuremath{{\cal E\!LH}_{rhs}}\xspace}
\newcommand{\ELH}{\ensuremath{{\cal E\!LH}}\xspace}
\newcommand{\EL}{\ensuremath{{\cal E\!L}}\xspace}
\newcommand{\NC}{\ensuremath{{\sf N_C}}\xspace}
\newcommand{\NI}{\ensuremath{{\sf N_I}}\xspace}
\newcommand{\NR}{\ensuremath{{\sf N_R}}\xspace}
\newlength{\indxlength}
\newcommand{\ExpTimeL}{\textsc{ElExp}\xspace}
\newcommand{\PTimeL}{\textsc{ElP}\xspace}
\newcommand{\PTimePL}{\textsc{PlP}\xspace}
\newcommand{\PL}{\textsc{Pl}\xspace}
\newcommand{\XL}{\textsc{El}\xspace}
\newcommand{\Amc}{\ensuremath{\mathcal{A}}\xspace}
\newcommand{\Dmc}{\ensuremath{\mathcal{D}}\xspace}
\newcommand{\Emc}{\ensuremath{\mathcal{E}}\xspace}
\newcommand{\Hmc}{\ensuremath{\mathcal{H}}\xspace}
\newcommand{\Imc}{\ensuremath{\mathcal{I}}\xspace}
\newcommand{\Lmc}{\ensuremath{\mathcal{L}}\xspace}
\newcommand{\Mmc}{\ensuremath{\mathcal{M}}\xspace}
\newcommand{\Pmc}{\ensuremath{\mathcal{P}}\xspace}
\newcommand{\Rmc}{\ensuremath{\mathcal{R}}\xspace}
\newcommand{\Tmc}{\ensuremath{\mathcal{T}}\xspace}
\newcommand{\Fmf}{\ensuremath{\mathfrak{F}}\xspace}
\newcommand{\Lmf}{\ensuremath{\mathfrak{L}}\xspace}
\newcommand{\examples}{\ensuremath{\Emc}\xspace} 
\newcommand{\hypothesisSpace}{\ensuremath{\Lmc}\xspace} 
\newcommand{\target}{\ensuremath{t}\xspace}
\newcommand{\lab}[2]{\ensuremath{\ell_{#2}(#1)}\xspace}
\newcommand{\learnertm}{\ensuremath{L}\xspace}
\newcommand{\teachertm}{\ensuremath{T}\xspace}
\newcommand{\sma}{{\ensuremath{{\boldsymbol{\sigma}}}}}
\newcommand{\e}{\ensuremath{e}\xspace} 
\newsavebox{\spacebox}
\newcommand{\blank}{\usebox{\spacebox}}
\begin{document}
\title{On the Complexity of Learning \\ Description Logic Ontologies\thanks{Supported by the University of Bergen.}}
%
%\titlerunning{Abbreviated paper title}
% If the paper title is too long for the running head, you can set
% an abbreviated paper title here
%
\author{Ana Ozaki\orcidID{0000-0002-3889-6207}}
\authorrunning{A. Ozaki}
% First names are abbreviated in the running head.
% If there are more than two authors, 'et al.' is used.
%
\institute{University of Bergen, Norway  \\
\email{Ana.Ozaki@uib.no}}
\maketitle              % typeset the header of the contribution
\begin{abstract}
Ontologies are a popular way of representing
 domain knowledge, % about a domain.
in particular, knowledge in domains  related to life sciences.
(Semi-) automating 
the process of building an ontology has attracted researchers from different 
communities into a field called ``Ontology Learning''. 
%~ The process of building an ontology can be divided into two main tasks: 
%~ (i) finding the relevant vocabulary and the appropriate ontology language, 
%~ and (ii) discovering how the vocabulary should be related using the logical 
%~ constructs available in the chosen ontology language. 
%In this chapter, 
%we will
We provide a formal specification of the exact and the
probably approximately correct learning models from computational learning theory.
Then, we recall from the literature  complexity results for learning lightweight description logic (DL) ontologies in these models.
Finally, we highlight  
other approaches %that have been 
proposed in the literature  for 
learning  DL ontologies. % y formulated in DL.
%learning DL ontologies.
%Then, we will provide an overview of some other approaches proposed in the 
%literature  to semi-automate the 
%process of building an ontology formulated in DL.
 %We will then present some results on the complexity of 
 %learning lightweight DL ontologies in the exact and probably approximately correct learning models from computational learning theory.

\keywords{Description Logic  \and Exact Learning \and Complexity Theory}
\end{abstract}
\section{Introduction}
%Learning from neural networks

%In the field of knowledge representation,
Ontologies
have been used to build  concept and role hierarchies mapping and integrating
the vocabulary of data sources, to 
model definitional sentences in a domain of interest,
to support the inference of facts in knowledge graphs, among others. 
%integrate multiple data sources,
However, modelling an ontology that captures in a precise
and clear way the relevant knowledge of a domain can be quite time-consuming.
To imagine this, consider the task of writing a text on a particular topic.
The writer needs to select the right words, think about their meaning,
 delineate the scope, and the essential information
she or he wants to convey. The knowledge of the writer
needs to be clearly represented in a language, using the vocabulary and
the constructs available in it. %In this sense,
In this way, building an ontology
can be seen as a similar process
but often there is the additional challenge that
the ontology needs to capture the knowledge of a domain in which the
`writer'---an ontology engineer---is not familiar with.
The information in an ontology may need  to be
validated by
%discussed
%with
a domain expert. 
%and,
%the language.  
%Then, 
%In the same way as organizing  ideas and writing a text that explains some particular topic %presents a topic
%requires
%time and effort, 
Because building and maintaining ontologies are demanding tasks, %of the difficulty %in building and maintaining ontologies,
several researchers have worked on developing %various kinds of
theoretical results
and practical tools  for supporting %the various stages of
this process~\cite{lehmann2014perspectives,anasurvey}. 
%the process of building
%and maintaining ontologies.  

Here we consider two learning models that were applied  to model the
process of building an ontology.
The first is the exact learning model~\cite{angluinqueries}. 
%`What is the complexity of learning?',
%`Can a learning problem be reduced to another?'
In the exact learning model, a learner attempts to communicate
with a teacher in order to identify
an abstract target concept.
When instantiating the exact learning model to capture the process of building an ontology,
one can consider that the teacher is a domain expert who knows
the domain but cannot easily formulate it as an ontology~\cite{KLOW18}.
On the other side, the role of the learner is played by an ontology engineer.
The abstract target that the ontology engineer wants to identify is an ontology
that reflects the knowledge of the domain expert (the teacher). % known by a teacher. %, by interacting with the teacher. %posing queries.
%Different communication protocols have been proposed.
%

Although the teacher in the exact learning model is often described as a human (potentially a domain expert), it can also be
  a batch of examples~\cite{ariasetal}, an artificial neural network~\cite{DBLP:conf/nips/WeissGY19,DBLP:conf/icml/WeissGY18},
 a Tsetlin machine~\cite{DBLP:journals/corr/abs-1804-01508} or any another formalism that can be used
 to simulate the teacher. %expresses the target concept.
The most studied communication protocol is based on % queries of two kinds:
\emph{membership} and \emph{equivalence} queries. 
%From now on, whenever we speak about the exact learning model, we
%mean the instance of the model in which the communication protocol
%consists of membership and equivalence queries. 
%
%A membership query
A membership query gives to the learner the ability to formulate an example and ask for its classification (``does X hold in the domain?''). 
This  mode of learning is  called \emph{active learning}.
% and it can significantly reduce
%the search space of the learning process.
%It can improve several learning tasks
%since a smart choice of queries 
%   significantly reduces the search space of the learning process.
   In an equivalence query, the learner asks whether 
a certain hypothesis is equivalent to the target.
There are various results in the literature showing that
the combination of these two types of queries allows
the learner to correctly identify the target concept in polynomial
time, with hardness results for the case in which one of the two queries is disallowed~\cite{angluinqueries,DBLP:journals/ml/AngluinFP92,DBLP:journals/ml/Angluin90,DBLP:journals/toct/HermoO20,DBLP:journals/jacm/PittV88}.

%When studying learnability and the complexity of a learning
%problem it is important to specify 
%One can divide Active and passive modes of learning.
%~ Computational learning theory is a field concerned in understanding
%~ theoretical aspects of the learning phenomenon.
%~ In this field one is interested in investigating %investigating how to formally
%~ %understanding
%~ notions of  learnability, complexity, reducibility, among others.
%~ %
 %~ Here we study these notions under the exact~\cite{angluinqueries} and the probably approximately correct (PAC) learning
%~ models~\cite{Valiant}. These models essentially specify what a learner can do in order to achieve
%~ learnability.

The second model we study is the classical probably approximately correct (PAC) learning model~\cite{Valiant}.
In the PAC  model, a learner receives classified examples according 
to a probability distribution. Then, the learner attempts to build a hypothesis
that is consistent with the examples. This mode of learning is called \emph{passive learning}
because, in contrast with the active learning mode, the learner has no control of
which examples are going to be classified.
One can instantiate this model to the problem of learning ontologies
by considering that %, instead of interacting with the domain expert,
the ontology engineer starts attempting to collect information about
the domain at random (instead of interacting with the domain expert).  One can also consider the case in which,
in addition to the search at random,
the ontology engineer can pose membership queries to the  expert.
%It it known that learnability with equivalence queries is transferable to
%
Equivalence queries are not considered in the PAC model because of a general
result showing that learners that can pose equivalence queries to learn a certain target
are also able to accomplish this task within the PAC model~\cite{angluinqueries} (the combination
is not interesting because one problem setting is `easier' than the other, more details
in Subsection~\ref{sec:learnability}). % we would give too much power to the learner). 
%So allowing equivalence queries means that instead of receiving examples according to
%a probability distribution, one can ask for equivalence queries using the hypothesis
%built by the learner. 

%are transferable to
%to the PAC setting, thus this type of query gives strictly more po .
%, whereas in the exact  model 
%the learner (an algorithm)
%is 
%a model where a learner 
%interacts with an oracle (often described as human user)
%in order to learn an abstract target,  which in this project is a DL ontology. 
%The interaction 
%is based on examples, which are positive if they correctly 
%describe the target, and negative, otherwise. %, representing a set of examples. 
%%, described by means of examples. 
%~ The classical 
%~ communication protocol is based on queries of two types:
%~ membership and equivalence queries. 
%~ %In the latter, the learner asks whether 
%~ %the hypothesis constructed by the learner during the interaction is equivalent to the target. 
%~ %Intuitively, a membership query 
%~ %The former is a call to the oracle 
%~ %in which the learner asks whether an example is positive. 
%~ %, % (i describes the target, 
%~ %while 
%~ %
%~ The exact and the PAC learning models are closely related. 
%~ Equivalence queries can be simulated by 
%~ an oracle which gives classified examples according to a probability distribution~\cite[Section 2.4]{angluinqueries}. 
%~ Using this connection, one can transform algorithms designed for the exact learning model into
%~ algorithms for
%~ %algorithms for 
%~ the PAC learning model extended with membership queries. 
%\nb{...}

Having these models in mind, we first present
the syntax and the semantics of the ontology language \ELH~\cite{dlhandbook,baader_horrocks_lutz_sattler_2017}.
This is a prototypical lightweight ontology language based on description logic (DL).
Then,  
we formalise the exact and the PAC learning models using
 notions from the theory of computation. % of a learning system.
 We recall from the literature complexity results for learning lightweight
DL ontologies in these models and provide intuitions about these results. 
 Finally, we point out 
 other approaches that have been applied for learning DL ontologies. 
%\nb{FCA, DL}

%~ By \emph{learning} we 
%~ mean the process of acquiring some desired 
%~ kind of   
%~ knowledge
%~ represented in a well-defined and machine-processable form. 
%~ \emph{Examples} are pieces of information that   
%~ characterise such knowledge, given 
%~ as part of the input of a learning process. 
%~ We  formalise these relationships as follows. 

\section{Description Logic}\label{sec:el}
We introduce \ELH~\cite{dlhandbook,baader_horrocks_lutz_sattler_2017}, a classical lightweight
DL which features
  existential quantification ($\exists$) and conjunction ($\sqcap$).
Let $\NC$, $\NR$, and \NI be countably infinite and disjoint 
sets of \emph{concept} and \emph{role} names.
An \ELH ontology (or \emph{TBox}) is a finite set 
of \emph{role inclusions} (RIs) $r\sqsubseteq s$ with $r,s\in\NR$ and
 \emph{concept inclusions} (CIs)
$C\sqsubseteq D$ %, called ,
with $C,D$   \EL \emph{concept expressions} built according to the 
  rule
\begin{gather*}
C,D::= A \mid \top \mid C\sqcap D\mid \exists r.C
\end{gather*}
with $A\in\NC$ and $r\in\NR$. 
%An \EL concept expression is an \ALC concept expression without any occurrence 
%of the negation symbol ($\neg$).
An \ELH \emph{TBox} is a finite set of RIs and CIs $C\sqsubseteq D$, 
with $C,D$ being \EL concept expressions.
We denote by $\ELHlhs$ and $\ELHrhs$
the fragments of \ELH that allow only a concept name on the right-hand side
and on the left-hand side, respectively. That is, \ELHlhs
is the language that allows \emph{complex} \EL expressions on the left-hand side
and the same idea applies for \ELHrhs.
An \EL TBox is an \ELH TBox that does not have RIs.
We may write $C\equiv D$ %(called \emph{concept equivalence})
as a short hand for having   both $C\sqsubseteq D$ and $D\sqsubseteq C$.
An \emph{assertion} is an expression of the form
$r(a,b)$ or of the form $A(a)$, where $A\in\NC$, $r\in\NR$, and $a,b\in\NI$.
An \emph{ABox} is a finite set of assertions. An \ELH \emph{instance query}
(IQ) is of the form $C(a)$ or $r(a,b)$ with $C$  an
\EL concept expression, $r\in\NR$, and $a,b\in\NI$.

We now present the usual semantics of \ELH,
%The semantics of \EL %C (and of the \EL fragment)
which is based on \emph{interpretations}. 
An interpretation \Imc is a pair $(\Delta^\Imc,\cdot^\Imc)$ 
where $\Delta^\Imc$ is a non-empty set, called the \emph{domain of \Imc}, and $\cdot^\Imc$ is a function 
mapping each $A\in\NC$ to a subset $A^\Imc$ of $\Delta^\Imc$,
 each $r\in\NR$ to a subset $r^\Imc$ of $\Delta^\Imc\times \Delta^\Imc$, and
 each $a\in\NI$ to an element in $\Delta^\Imc$. 
The function $\cdot^\Imc$ extends to arbitrary \EL
concept expressions as follows: 
\begin{align*}
(\top)^\Imc &:= {}  \Delta^\Imc\\
 (C\sqcap D)^\Imc &:= {}  C^\Imc\cap D^\Imc \\
(\exists r.C)^\Imc &:= {}  \{d\in\Delta^\Imc\mid \exists e\in C^\Imc \text{ such that } (d,e)\in r^\Imc\}
\end{align*}
%An
The interpretation
\Imc \emph{satisfies} an RI $r\sqsubseteq s$ %, written $\Imc\models r\sqsubseteq s$,
iff 
$R^\Imc\subseteq s^\Imc$. 
It satisfies a CI $C\sqsubseteq D$ %, written $\Imc\models C\sqsubseteq D$,
iff 
$C^\Imc\subseteq D^\Imc$. 
It satisfies a TBox \Tmc %, in symbols $\Imc\models \Tmc$,
iff 
\Imc satisfies all RIs and CIs in \Tmc.
We write $\Imc\models \alpha$ if
\Imc satisfies an RI, a CI, or a TBox $\alpha$. 
A TBox \Tmc \emph{entails} a CI $\alpha$ (or a TBox $\Tmc'$), written
$\Tmc\models \alpha$ (or  $\Tmc\models\Tmc'$), iff all interpretations 
satisfying \Tmc also satisfy $\alpha$ (or $\Tmc'$).
Two TBoxes \Tmc and $\Tmc'$ are \emph{equivalent}, written $\Tmc\equiv\Tmc'$, iff 
$\Tmc\models\Tmc'$ and $\Tmc'\models\Tmc$.
These notions can be adapted as expected for
defining satisfaction and entailment of  assertions, ABoxes, and IQs~\cite{dlhandbook}. 
%We also have that \Imc satisfies an assertion
%$A(a)$ (or $r(a,b)$) if $a^\Imc\in A^\Imc$
%($(a^\Imc,b^\Imc)\in r^\Imc$).
%It satisfies an ABox if  
%sets of interpretations satisfying each of them are equal. 
Given a TBox \Tmc, the \emph{signature} $\Sigma_\Tmc$ of \Tmc
%we denote by 
is the set of concept and role names occurring in it.

 \begin{example}%\nb{...change}
Although \ELH is a very simple language, it is already  useful
to represent certain kinds of static knowledge. 
One can express 
in \ELH %, for example, 
that `Penicillamine nephropathy is a renal disease' with the CI\footnote{This example
follows  modelling guidelines  and terms found in the ontology from the medical domain SNOMED CT~\cite{snomed-rt-97}. }:
$${\sf PenicillamineNephropathy}\sqsubseteq {\sf RenalDisease}$$
%One can also express
The definitional sentence
`Penicillamine nephropathy is a renal disease of the kidney structure caused 
by penicillamine' can be expressed as:
$${\sf PenicillamineNephropathy}\equiv {\sf RenalDisease} \ \sqcap $$$$
\exists {\sf findingSite}.{\sf KidneyStructure} \sqcap 
\exists {\sf causativeAgent}.{\sf Penicillamine}.$$
An important advantage of representing domain knowledge in an ontology
is that ambiguities found in natural language can be removed. For example,
one can distinguish when `is' should mean a subset relation (represented syntactically
with `$\sqsubseteq$') from when it means  an equivalence (represented with the symbol  `$\equiv$'). 
\end{example}

In the next section, we introduce  notions from the theory of computation that
are relevant to define learnability and complexity classes in
the exact and PAC learning models. 

\section{The Complexity of Learning}
%To investigate the complexity of a learning problem one needs to
%specify what is available for the learner and what is the criteria
%for achieving learnability.

Complexity classes are defined in terms of a  model of computation, a
type of problem, and bounds on the resources (usually time and memory) needed to solve a problem~\cite{sipser-complexity}.
%\nb{cite a book complexity theory}
In this section, we formally define complexity classes for \emph{learning problems} (Subsection~\ref{sec:learnability}).
Before that, we define a general model of computation that represents the communication of
 a learner and a teacher via queries (Subsection~\ref{sec:comp}). 
This model  can be specialised
for learning problems in the exact and the PAC learning models.
%~ The type of problem is defined using a learning model.
For the exact learning model, 
%the most studied types of
%queries are
we assume that the learner can pose membership and equivalence queries.
In the PAC learning model, the learner can pose sampling queries.
We describe the queries in detail in Subsection~\ref{sec:learning-problems}.
%~ Here we consider the exact and the PAC learning models.
%~ %
%~ We start by defining  models of computation tailored
%~ for the problems  we consider. % suitable for these models. %tailored for problems.  
%~ %the
%consider two types of problems
%In the exact learning model, the learner interacts with the teacher

\subsection{Model of Computation}\label{sec:comp}

The main advantage of  defining the model of computation
%\nb{to change learnability}
%and the notion of learnability %(w.r.t. a learning model)
is that this opens the possibility of analysing
the  learning phenomenon in light of %ther notions
%in
the theory of computation.
Our model of computation for learning problems
%formulated under the exact learning model
is based on \emph{learning systems}~\cite{DBLP:conf/icalp/Watanabe90}. %\nb{add reference}
Learning systems are formulated using the notion of a pair $(\learnertm,\teachertm)$ of 
  multitape Turing machines  (MTM),
one for the learner, $\learnertm$, and one for the teacher, $\teachertm$.
There are %$4$
four kinds of tape:
%(1) the memory of the learner,
%(2) the memory of the teacher, (3) the memory shared by the learner and the
%teacher. 
%is a pair $<S,T>$ where $S,T$
%are deterministic Turing machines.
\begin{itemize}
\item $\learnertm,\teachertm$ share a read-only \emph{input tape};
\item $\learnertm,\teachertm$ share a read-write \emph{communication tape};
\item $\teachertm$ has a read-only tape, called \emph{oracle tape}, not accessed by $\learnertm$; and  
\item $\learnertm$ has a write-only \emph{output tape}, not accessed by $\teachertm$.
\end{itemize}

Intuitively, the computation of the two MTMs represents the
interaction of the learner and the teacher via queries, where %interacting with 
posing a query to 
the teacher means writing down 
the input of the query in the communication tape and entering  the
corresponding query state. %, and waiting.
Then the teacher %then
computes the answer,  writes it in the communication tape (if the computation of the answer terminates), and enters  the
corresponding answer state. %, and stops.
The learner %resumes its execution and
%can now
reads the answer in the communication tape and continues its computation.
This process may continue forever or %s until (if ever)
%it
halt when the learner writes
its final hypothesis in the output tape and enters  the final state. 
It is  assumed that
 the teacher
 %always terminates the computation of its answers
 %leaves an answer state it always
 %returns to another answer state within a finite number of steps
 %and that $\teachertm$
 %and, moreover, that it
 never enters  the final state (only the learner
 can enter  the final state).

In the definition of a learning system $(\learnertm,\teachertm)$,
we consider that $\learnertm$ is a  deterministic MTM (DMTM) with three
tapes (input, communication, and output tapes)
and that the set of states contains special elements called \emph{query states},
one for each type of query. 
The teacher $\teachertm$ is a non-deterministic MTM (NMTM) with three
tapes (input, communication, and oracle tapes) and the set of states contains
 special elements called \emph{answer states}, one for each type of query.
We describe the types of queries for the exact and PAC learning models in Subsection~\ref{sec:learning-problems}.

A DMTM with $k$ tapes can be defined as a tuple $\Mmc = (Q,\Sigma, \Theta, q_0,q_{\sf f})$ where:
%\begin{itemize}
%\item
$Q$ is a finite set of \emph{states};
%\item
$\Sigma$ is a finite \emph{alphabet} containing the \emph{blank symbol}
$\blank$; 
%\item
$\Theta:(Q\setminus\{q_{\sf f}\})\times\Sigma^k\rightarrow Q\times \Sigma^k\times\{l,r\}^k$ is the
\emph{transition function}; %, where $k$ is the number of tapes;
and 
%\item
$\{q_0,q_{\sf f}\}\subseteq Q$ are the \emph{initial} and 
\emph{final} states.
The expression $\Theta(q,a_1,\ldots,a_k)=(q',b_1,\ldots,b_k,D_1,\ldots,D_k)$,
with $D_i\in\{l,r\}$, means that if \Mmc is in state $q$ and heads
$1$ through $k$ are reading the symbols
 $a_1$ through $a_k$ (resp.) then \Mmc goes to state $q'$,
 the symbols $b_1,\ldots,b_k$ are written in 
 tapes $1,\ldots,k$ (resp.) and each head $i$ moves  to the direction corresponding to $D_i$. 
An NMTM is defined
in the same way as a DMTM except that $\Theta:(Q\setminus\{q_{\sf f}\})\times\Sigma^k\rightarrow
\Pmc(Q\times \Sigma^k\times\{l,r\}^k)$. 
That is,  $\Theta(q,a_1,\ldots,a_k)$ is now  a
\emph{set} of expressions of the form $(q',b_1,\ldots,b_k,D_1,\ldots,D_k)$.
%,
%representing the choices. 

%A \emph{run} of $(\learnertm,\teachertm)$

%This notion of an algorithm   with access to oracles  can be formalised using 
%oracle Turing machines or 
%\emph{learning systems}~\cite{DBLP:conf/icalp/Watanabe90}, %\nb{cite watanabe} %, 
%A learning system consists of 
%oracle Turing machines, 

%\nb{...}
A \emph{configuration} of a MTM with $k$ tapes is a $k$-tuple %of the form
$(w_1qw'_1,\ldots,w_kqw'_k)$
%a $k$-tuple of  words~$w_iqw'_i$
with $w_i,w'_i \in
\Sigma^\ast$ and $q \in Q$, meaning that the tape $i$ contains the
word $w_iw'_i$, the machine is in state $q$ and the head is on the position
of the left-most symbol of $w'_i$.
%We assume w.l.o.g. that all configurations $wqw'$ satisfy
%$|ww'|\leq m+1$, where $m+1$ is polynomial in $n$.
The notion of \emph{successive configurations} is defined as expected
  in terms of the transition relations of $\learnertm$ and $\teachertm$. %$\Theta$.
  Whenever $\learnertm$ enters  a query state,
  the transition relation of 
  $\teachertm$ is used to define a successive configuration (it may not be unique due to non-determinism of $\teachertm$) %(from the corresponding answer state)
  and whenever $\teachertm$ enters again in an answer state
  then  
  the transition relation of $\learnertm$ defines the (unique) successive configuration
  ($\learnertm$ resumes its execution). % (from the query state).
  %
  %resumes the execution and the same happens
  %with $\learnertm$ whenever $\teachertm$ enters in the answer state.
  %
 % 
%\nb{..}
 A \emph{computation} of   $(\learnertm,\teachertm)$
 on an input word $w_0$ is a tree whose paths are sequences of successive configurations 
 $\alpha_0, \alpha_1, \ldots$, where $\alpha_0=q_0 w_0$ is the
 \emph{initial configuration} for the input $w_0\in (\Sigma
 \setminus \{\blank\})^\ast$ and $q_0$ is the initial state of $\learnertm$.
 The branches of the tree correspond to the different
 possibilities for  $\teachertm$ to move from one state to another.

 The model of computation that we presented can be generalised to the case in which there are
 multiple learners and multiple teachers. For our purposes, it suffices
 to consider only one learner and one teacher.
In the following, we
%define the exact and PAC learning models
%and
explain how the computational model we described can be tailored
to the exact and the PAC learning models, as well as some variants of these models. 
%We formally introduce the classical exact and PAC learning models. 

\subsection{Learning Frameworks and Queries}
\label{sec:learning-problems}

%We can define the notions of learnability, complexity classes, and 
%the notion of %learnability,
%reducibility,   
%
%the notion of learnability
%and complexity classes
%now
%tailored for
%learning problems.

To define the learnability in the exact and PAC models (Subsection~\ref{sec:learnability}),
we use the notion of a \emph{learning framework} and three types of queries (membership and equivalence queries
for the exact learning model and sample queries for the PAC learning model). 
A learning framework $\Fmf$ is a triple 
$(\examples, \hypothesisSpace, \mu)$ where
\begin{itemize}
\item $\examples$ is a set of examples, 
\item $\hypothesisSpace$ is a set of concept representations\footnote{In  Machine Learning,
 %a \emph{concept} is often defined as a set of examples and
 a concept representation is a way of representing a set of examples. 
This  differs from the notion of a concept in  DL.}, called \emph{hypothesis space},
\item and $\mu$ is a function that maps each element of \Lmc to a 
set of %(possibly classified)
examples in $\examples$.
\end{itemize}

We call \emph{target} %(here simply called \emph{target}) 
%is
a fixed but arbitrary element   of
 %a particular element of
 $\hypothesisSpace$ that the learner
 % represents 
%the abstract concept
%that the learner
wants to acquire. 
A \emph{hypothesis} is an element   of $\hypothesisSpace$
that represents the `idea' of  the learner about
the target. This element is often updated during the computation
of a learning system $(\learnertm,\teachertm)$
until %it %
$\learnertm$
reaches its final state (if ever).
Given a target $t\in\Lmc $, we say that an example $\e$ is \emph{positive}
for $t$ if $\e \in \mu(t)$, and \emph{negative} otherwise.
Given a hypothesis $h$ and a target $t$ in $\Lmc$  and an example $\e\in\examples$,
we say that $\e$ is a \emph{counterexample} for $t$ and $h$
if $\e \in \mu(t)\oplus\mu(h)$ (where $\oplus$ denotes the symmetric difference). We may omit `for $t$' and 
`for $t$ and $h$' if this is clear from the context. 
%are called a .

\begin{remark}
Given a DL  $\Lmf$, we denote by $\Fmf(\Lmf)$ the learning
framework
$(\examples, \hypothesisSpace, \mu)$
where $\examples$ is the set of CIs and RIs that can be formulated in \Lmf (using symbols from \NC and \NR),
\hypothesisSpace is the set of all \Lmf TBoxes, and 
 $$\mu(\Tmc)=\{\alpha \mid \Tmc \models \alpha, \text{ with $\alpha$ a CI or an RI in }\Lmf\}.$$ 
%where $\Lmc$ is the set of all TBoxes that can be
%formulated in $\Lmf$. 
This  setting is called  \emph{learning  %\ELH
%TBoxes
from  entailments}.
%For example,
In the learning framework $\Fmf(\ELHrhs)=(\examples, \hypothesisSpace, \mu)$
we have that 
%given an \ELHrhs TBox
$\Tmc=\{A\sqsubseteq \exists r.A\}\in\Lmc$ and,
for all $n\in\mathbb{N}$, the CI $A\sqsubseteq \exists r^n.A$ is in $\mu(\Tmc)$,
where $\exists r^{n+1}.A:=\exists r.\exists r^{n}.A$ and $\exists r^1.A:=\exists r.A$.
% is a short hand for $\exists r.\ldots \exists r.A$ (with $n$ nestings). 
% corresponds to the 
%That is, the target and the hypotheses are \EL TBoxes and 
%the examples are CIs.
%This corresponds to the
%learning framework %with $\alpha$ a CI or an RI. 
\end{remark}

One could define a more general notion of a learning framework,
where the hypothesis space for the hypothesis of the learner differs from
%is potentially larger
%than
the hypothesis space that contains the target.
Also, the mapping function $\mu$ could be adapted to represent
fuzzy sets of examples.
%To simplify the presentation, we will
We keep
the version introduced above because it is general enough
for our purposes and % of this paper. 
%to
covers  classical problems
in the literature~\cite{angluinqueries,Valiant}. %\nb{add reference} 
%
%It is important to differentiate concepts from their representations
%because 
%\nb{...}
%
%\smallskip
%
We now describe in detail the queries that the learner can pose
and how the teacher answers these queries.
%We start with membership and equivalence queries. 
%Assume we are given a learning system
Consider a learning framework $\Fmf=(\examples, \hypothesisSpace, \mu)$,
and a learning system $(\learnertm,\teachertm)$ with 
 a fixed but arbitrary  target $t\in\Lmc$ in the oracle tape.
% be  fixed but
%arbitrary.
%Let $(\learnertm,\teachertm_\Fmf(t))$ be a learning system.  %8 element of $\Lmc$.
%we write $(\learnertm,\teachertm(t))$ if
\begin{itemize}
\item A \textbf{membership query} happens whenever $\learnertm$ writes an example $\e$ in the communication
tape and enters the membership query state. In this case,  
$\teachertm$ resumes the execution and (if the computation terminates) writes
`yes' %\footnote{We assume these answers can be formulated using the alphabet of $$}
in the communication tape if $\e \in \mu(t)$, otherwise, it writes `no'
(assume such answers can be formulated using symbols from the alphabets of $\learnertm$ and $\teachertm$). 
\item An \textbf{equivalence query} happens whenever $\learnertm$ writes a hypothesis  $h \in \Lmc$ in the communication
tape and enters  the equivalence query state. 
%When this happens
The teacher $\teachertm$ resumes its execution and (if the computation terminates) writes
some  $\e \in \mu(t)\oplus\mu(h)$ in the communication tape, or `yes'
if $\mu(t)=\mu(h)$.
%,
%and 
%`yes' otherwise. %in the communication tape iff $\e \in \mu(t)$.
%some $\e \in \mu(t)\oplus\mu(h)$ if $\mu(t)\neq\mu(h)$
\item  A \textbf{sample query} happens whenever $\learnertm$ enters  the sample query state.
In this case, the teacher $\teachertm$ resumes its execution and (if the computation terminates) writes
some $(\e,\lab{\e}{t})$ in the communication tape, where
the
choice of $\e\in\examples$ is according to a fixed but arbitrary
probability distribution on $\examples$ (unknown to the learner)
%$\e\in\examples$,
and $\lab{\e}{t}=1$, if $\e\in\mu(t)$, and $0$ otherwise. %, and, moreover, . % \Dmc. 
\end{itemize}
%the
%oracle tape
%Intuitively,
We  write $(\learnertm_\Fmf,\teachertm_\Fmf(t))$  to indicate that %if %means that
$t$ is in the oracle tape and queries/answers are as just described for a
learning framework \Fmf %made by 
% provided by $\teachertm_\Fmf(t)$
%$\teachertm$ `behaves' as if $t\in\Lmc$ is the target
(we may omit the subscript \Fmf if this is clear from the context). %$\teachertm_\Fmf(t)$).
For some learning frameworks %$(\examples, \hypothesisSpace, \mu)$
and some types of queries, it can be assumed that
the computation of answers by the teacher %$\teachertm$
always terminates independently of which $t\in\Lmc$ happens to be in the oracle tape.
One example is when the $\mu$ function encodes the entailment relation
and the entailment problem of the logic represented %within the elements of
in \Lmc
 is decidable (e.g., entailment in \ELH is decidable in polynomial time~\cite{BBL-EL}).
 However, if the entailment problem is undecidable 
 this assumption cannot be made  
 independently of the content of the oracle tape (e.g., entailment in first-order logic).
 %(but it can happen for particular targets).
%

Even if there is a teacher that always terminates depending on the content of the
oracle tape, naturally,
one cannot assume that all of them will terminate.
So we define the following notion.
Let $\teachertm(t)$ be a teacher with $t\in\Lmc$ in the oracle tape.
%Given %a learner $\learnertm$ and
%$t\in\Lmc$,
%a  learning framework $\Fmf=(\examples, \hypothesisSpace, \mu)$, 
We say that  $\teachertm(t)$ is \emph{terminating for membership queries}
if for
%every $t\in\Lmc$
%and
 every   possible membership query (within a learning framework) %posed by the learner $\learnertm$,
%by finite input written in the communication tape by a learner,
the teacher $\teachertm(t)$ always terminates the computation of the answer. % for the query.
This notion can be easily adapted for other types of queries. % in the expected way.

The multiple ways of
%implementing the
%behaviour of $\teachertm$ for
choosing %a counterexample 
$\e \in \mu(t)\oplus\mu(h)$ in an equivalence query
and an example $\e\in\examples$ in a sample query 
is captured by the non-determinism of $\teachertm$ (see Subsection~\ref{sec:comp}). % the teacher. 
For representing sample queries, one can
consider the special case in which the 
NMTM is a multitape probabilistic Turing machine~\cite{sipser-complexity}.
%
%where the choice of transitions follows a fixed but arbitrary probability distribution (considered to be unknown by the learner).
We may write $\teachertm_\Dmc$ to indicate that, whenever a sample query
is posed by $\learnertm$, we have that $\teachertm$
chooses an example according to the (same) probability distribution \Dmc,
with the events of drawing examples being mutually independent (see e.g.~\cite{Shalev-Shwartz:2014:UML:2621980} for more details on sample queries
and~\cite{anasurvey} for a presentation using this notation).
%It is  assumed that .

%~ To define sample queries, we consider the special case in which the teacher is represented
%~ by  an NMTM where the choice of transitions follows a fixed but arbitrary probability distribution \Dmc. 
%~ That is, the teacher is represented by a probabilistic multitape Turing machine. 
%~ In this case, we have a 
%~ \begin{itemize}
%~ % \item
%~ %\nb{...} (Sample query)
%~ \item  \textbf{sample query}  whenever $\learnertm$ enters in the sample query state.
%~ When this happens, the teacher $\teachertm$ resumes its execution and writes
%~ some $(\e,\lab{\e}{t}$ in the communication tape, where $\e\in\examples$,
%~ $\lab{\e}{t}=1$ if $\e\in\mu(t)$ and $0$ otherwise, and, moreover, the
%~ choice of $\e$ is according to \Dmc. 
%~ %$\teachertm$ resumes the execution and writes
%~ \end{itemize}

%\nb{...}

%One can see an oracle as a (computable) function. 

%\nb{...}

%If the classification is into $\{1,0\}$, representing positive and negative labels, then $\mu$ simply associates 
% elements $l$ of \Lmc to all examples  labelled with $1$ by $l$. 
%kind of knowledge  that is aimed for in the learning process.  

\subsection{Learnability and Complexity Classes}
 \label{sec:learnability}

We are   ready to define the notion of learnability
and complexity classes for learning problems. 
We write $Y\in(\learnertm,\teachertm(t))(X)$ if there is a finite computation of
the learning system  $(\learnertm,\teachertm(t))$ with $X$ in the input tape,
$t$ in the oracle tape,
 and the content written by the learner in the output tape, $\learnertm$,
 is $Y$.
We first define learnability for the exact learning model. 

 Let
$\Fmf = (\examples, \hypothesisSpace, \mu)$ be
a learning framework. % and let \Fmc be a $\sigma$-algebra of subsets of $\examples$. 
%A learning framework
Assume that the learner can pose membership and equivalence queries
and these are truthfully replied by the teacher, as described in Subsection~\ref{sec:learning-problems}.
We say that \Fmf is \emph{exactly learnable}
if there is 
a learner $\learnertm$
%ing system $(\learnertm,\teachertm)$
%deterministic algorithm ,
such that, for every $\target\in\hypothesisSpace$, there is a terminating $\teachertm(t)$ (for
membership and equivalence queries). %\nb{add notion of terminating T}
Moreover,
\begin{itemize}
\item 
 every  learning system $(\learnertm,\teachertm'(t))$ with a terminating $\teachertm'(t)$
 %eventually
 halts and every $h\in(\learnertm,\teachertm'(t))(\Sigma_t)\cap\hypothesisSpace$
%we have
satisfies  % is such that  
 $\mu(h) = \mu(\target)$, %for, all 
%it eventually halts and outputs some $h\in\hypothesisSpace$ with
where $\Sigma_t$ is the signature of $t$. %\nb{teaches needs to finish}
\end{itemize}
% on input $\Sigma_t$ (the signature of $t$).
%Such algorithm is allowed to call
% the oracles ${\sf MQ}_{\Fmf,\target}$ and
%${\sf EQ}_{\Fmf,\target}$.
If the number of steps made by $\learnertm$ in each path of the computation tree is always %\nb{... termination}
% used by the algorithm 
%is
bounded by a polynomial $p(|\target|,|\e|)$, where $\target\in \hypothesisSpace$
is the target and $\e \in \examples$ is the largest counterexample written
 so far in the communication tape by $\teachertm'(t)$ (in the corresponding path), 
then \Fmf is \emph{exactly learnable in polynomial time}.

We denote by $\XL({\sf MQ},{\sf EQ})$ and
$\PTimeL({\sf MQ},{\sf EQ})$ the classes of all learning frameworks
that are, respectively,
exactly learnable
and
exactly learnable in polynomial time with membership and equivalence queries.
%If the number of steps made by the learner is always bounded by
%a exponential function then
%Also, one can conside the class
%Similarly, we denote by
%$\ExpTimeL({\sf MQ},{\sf EQ})$  of all learning frameworks
%that are exactly learnable in exponential time with membership and equivalence queries. 
One can easily adapt this notation
%e definition of %exact learnability and
%exactly learnability in polynomial time
to the case in which the learner is allowed to make an exponential number of steps,
denoted $\ExpTimeL({\sf MQ},{\sf EQ})$, or to the case in which 
the learner can only pose one type of query.
For  representing this, we simply drop ${\sf MQ}$ or ${\sf EQ}$ from the class name (e.g., $\PTimeL({\sf EQ})$ is the class of all learning
frameworks that are exactly learnable in polynomial time with only equivalence queries).
One can also consider other types of queries, such as subset and superset
queries~\cite{angluinqueries}, or queries that take into account
the history of previous queries~\cite{ozakitroquard}.
It follows from these definitions that
%\begin{itemize}
%\item
$ \PTimeL({\sf MQ},{\sf EQ})\subseteq \ExpTimeL({\sf MQ},{\sf EQ})\subseteq\XL({\sf MQ},{\sf EQ})$.
%\end{itemize}

We now define learnability in the PAC model.
 Let
$\Fmf = (\examples, \hypothesisSpace, \mu)$ be
a learning framework. % and let \Fmc be a $\sigma$-algebra of subsets of $\examples$. 
%A learning framework
Assume that the learner can pose sample queries
and these are replied by the teacher as  in Subsection~\ref{sec:learning-problems}.
The goal is to build a hypothesis such that `with high probability  there
is not much difference between the hypothesis and the target'.
A parameter $\epsilon$ quantifies the error of the hypothesis w.r.t. the target (how different they are).
Another parameter $\delta$ is used to quantify the confidence of meeting the error requirement (whether this has %it is achieved
high probability). 
Both parameters are real numbers ranging between $0$ and $1$. 
%The `high probability' and `not much difference' are   reis given by
%a parameter $\delta$ that is a real number ranging between $0$ and $1$)
Formally, we say that 
$\Fmf$
is \emph{PAC learnable}
if there is a function $f : (0, 1)^{2} \to \mathbb{N}$ %\nb{...sample of size $m$}
and a  learner $\learnertm$
%ing system $(\learnertm,\teachertm)$ %(which can be defined as a DMTM, see Subsection~\ref{sec:comp}) %algorithm
such that, for every $(\epsilon, \delta) \in (0, 1)^2$, % ($\epsilon$),
every probability distribution $\prob$ on $\examples$,
 and every target
$\target \in \hypothesisSpace$, there is a terminating $\teachertm_\Dmc(t)$ (for sample queries). Moreover, 
%we have that,
\begin{itemize}
 \item 
 every %learning system
 $(\learnertm,\teachertm'(t)_\Dmc)$ with a terminating $\teachertm'(t)_\Dmc$ halts
after \learnertm poses $m \geq f(\epsilon, \delta)$ samples queries 
 and, with probability at least $(1 - \delta)$ (over the choice of  sets of $m$ examples),
 $h\in(\learnertm,\teachertm'(t)_\Dmc)(\Sigma_t)\cap \Lmc$ 
satisfies  $\prob(\mu(h) \oplus \mu(t)) \leq \epsilon$. % over the choice of $m$ examples in $\examples$. 
%we have that .
\end{itemize}
If
the number of steps made by $\learnertm$ in each path of the computation tree is always
bounded by
%the depth of the computation tree is always bounded by
a polynomial function $\poly(|\target|, |\e|, 1/\epsilon, 1/\delta)$,
where
$\e$ is the largest example written in the communication tape by $\teachertm'(t)_\Dmc$ (in the corresponding path),
%in
%$S$, %the sample generated by $\EX$, 
then 
 $\Fmf$
is \emph{PAC learnable in polynomial time}.
We can easily extend these notions to the case in which
the learner can also pose membership queries (with a terminating teacher for
both sample and membership queries). 
We denote by $\PL$ and
$\PTimePL({\sf SQ})$ the classes of all learning frameworks
that are, respectively, PAC learnable
and
PAC learnable in polynomial time with sample queries.
%In the following,
Also, we write %$\PL({\sf MQ})$ and
$\PTimePL({\sf MQ},{\sf SQ})$ for the case
the learner can also pose membership queries. % (in addition to the sample queries). 

\begin{remark}\label{remark}
There is an important difference between the polynomial bound
for the exact and the PAC learning models. In the exact model, $\e$ is the largest counterexample
written  \emph{so far} by the teacher in the  path of computation,
while in the PAC model
$\e$ is the largest example written by the teacher (at any point of
the  path).
The more strict requirement of the exact model
is to avoid a loophole in the definition~\cite{DBLP:journals/ml/Angluin90}.
Whenever the hypothesis of the learner is not equivalent,   the teacher needs to provide
a \emph{counterexample}. Since  this  depends
on both the target and the hypothesis, there could
be a case in which the learner spends an exponential amount
of time (in the size of the target) to discover a hypothesis that would force the teacher to provide
an exponential counterexample.
 Then the learner would have spent a polynomial amount of time in
 the size of the largest counterexample but not in the size of the largest example
given \emph{so far}. This requirement is not necessary in the PAC model
because the teacher does not need to provide an example that depends
on the hypothesis of the learner, so there is no way the learner can `force' the teacher to return
a large example. %input for the learner.
\end{remark}

Theorem~\ref{thm:exact-pac} states that
positive results for the exact learning model with only equivalence queries
are transferable to the PAC model and
this also holds if both models allow membership queries. 

\begin{theorem}\label{thm:exact-pac}
The following holds~\cite{angluinqueries}:
\begin{itemize}
\item $\XL({\sf EQ})\subseteq \PL({\sf SQ})$;
\item $\PTimeL({\sf EQ})\subseteq \PTimePL({\sf SQ})$;
%Moreover,
%an analogous result holds with membership queries. That is, 
\item $\PTimeL({\sf MQ},{\sf EQ})\subseteq \PTimePL({\sf MQ},{\sf SQ})$. 
\end{itemize}
\end{theorem}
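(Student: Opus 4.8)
The plan is to give the classical Angluin-style simulation argument, converting an equivalence-query learner into a PAC learner by replacing each equivalence query with a batch of sample queries. I would prove the three inclusions uniformly, treating the membership-query case as an add-on that requires no change (membership queries are simply passed through unchanged, since the PAC-with-MQ model also has access to a membership oracle).

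First I would fix a learning framework $\Fmf=(\examples,\hypothesisSpace,\mu)$ that lies in $\XL({\sf EQ})$ (resp.\ $\PTimeL({\sf EQ})$, resp.\ $\PTimeL({\sf MQ},{\sf EQ})$), with a witnessing exact learner $\learnertm$. I would then construct a PAC learner $\learnertm'$ that runs $\learnertm$ as a subroutine. Whenever $\learnertm$ would pose an equivalence query with hypothesis $h$, the new learner $\learnertm'$ instead poses a block of $k_i$ sample queries (the block size $k_i$ to be chosen below, depending on $\epsilon$, $\delta$, and on how many equivalence queries have already been simulated). It inspects the returned labelled examples $(\e,\lab{\e}{t})$: if some $\e$ in the block is a counterexample for $t$ and $h$ — i.e.\ $\lab{\e}{t}=1$ but $\e\notin\mu(h)$, or $\lab{\e}{t}=0$ but $\e\in\mu(h)$, which $\learnertm'$ can check since $h$ is in its own hypothesis space — then $\learnertm'$ feeds that $\e$ to $\learnertm$ as the teacher's counterexample and continues. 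If no example in the block is a counterexample, $\learnertm'$ halts and outputs $h$. Membership queries issued by $\learnertm$ (in the third inclusion) are forwarded verbatim to the PAC-with-MQ teacher and the answers passed back.

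Next I would do the probabilistic accounting. Because $\Fmf\in\XL({\sf EQ})$, the exact learner $\learnertm$ asks at most some number $N$ of equivalence queries before halting; in the polynomial-time cases $N$ is bounded by a polynomial in $|\target|$ and in the size of the largest counterexample seen so far, which in turn (by the structure of the simulation) is bounded by the largest sampled example — this is exactly the point Remark~\ref{remark} is designed to make work, since in the PAC bound $\e$ may be the largest example at any point of the path. I would set each block size to $k_i = \lceil \tfrac{1}{\epsilon}\bigl(\ln\tfrac{1}{\delta} + i\ln 2\bigr)\rceil$ (or any standard choice making $\sum_i (1-\epsilon)^{k_i} \le \delta$), so that the probability that any simulated equivalence query wrongly reports ``equivalent'' while $\prob(\mu(h)\oplus\mu(t))>\epsilon$ is at most $\delta$ by a union bound over the at most $N$ blocks. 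When the simulation does halt and output $h$ without that bad event occurring, we have $\prob(\mu(h)\oplus\mu(t)) \le \epsilon$, which is exactly the PAC guarantee; and when the bad event does not occur, every counterexample fed to $\learnertm$ is genuine, so $\learnertm$ behaves exactly as it would against a truthful teacher and is therefore guaranteed to halt. The total number of sample queries is $f(\epsilon,\delta)=\sum_{i\le N}k_i$, which is finite (giving $\PL$) and polynomial in $|\target|,|\e|,1/\epsilon,1/\delta$ in the polynomial-time cases (giving $\PTimePL$), since $N$ and each $k_i$ are so bounded and the per-step overhead of checking membership in $\mu(h)$ is subsumed into $\learnertm'$'s running time.

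The main obstacle I expect is the running-time bookkeeping rather than the probability estimate: one must be careful that the ``largest example'' measuring the PAC polynomial truly dominates the ``largest counterexample so far'' measuring the exact polynomial, so that a polynomial-time exact learner yields a polynomial-time PAC learner. This is precisely why the two definitions use the different conventions flagged in Remark~\ref{remark}; the counterexamples handed to $\learnertm$ are a subset of the sampled examples, so the exact learner's polynomial bound, instantiated at the largest counterexample it has received, is at most its polynomial bound at the largest sampled example, and the simulation inherits a genuine $\poly(|\target|,|\e|,1/\epsilon,1/\delta)$ step count. A secondary technical point is handling the teacher-termination clauses in the definitions: one has to observe that from a terminating exact teacher (and, for the third item, a terminating membership teacher) one can build a terminating PAC sample teacher, which is immediate because sample queries always terminate once the distribution is fixed, and the forwarded membership queries terminate by hypothesis.
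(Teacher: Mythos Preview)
Your proposal is correct and follows essentially the same approach as the paper: the paper does not give a formal proof but only a paragraph of intuition (citing Angluin), and that intuition is precisely the simulation you describe---replace each equivalence query by a batch of sample queries, use a misclassified sample as a counterexample, and halt with the current hypothesis if none is found. Your treatment simply supplies the standard quantitative details (the block sizes $k_i$, the union bound, and the running-time bookkeeping tying Remark~\ref{remark} to the polynomial case) that the paper leaves to the reference.
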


The intuition for Theorem~\ref{thm:exact-pac} is that the learner can pose
sample queries instead of equivalence queries.
By posing sample queries, the learner can obtain a set of classified examples, drawn
according to a fixed but arbitrary  probability distribution. % (unknown to the learner).
If the current hypothesis of the learner misclassifies one of the examples
of this set then the learner has found a counterexample. So it can 
proceed as if it had posed an equivalence query and the teacher
had returned the counterexample.
Otherwise, it is shown in the proof of the theorem that if the sample is large enough
then any hypothesis consistent with the sample satisfies the criteria for PAC learnability. 

For presentation purposes, we have presented only \emph{time} complexity classes for
the exact and the PAC learning models. One can also consider classes that capture other
ways of measuring the resources used by the learner and/or the teacher~\cite{DBLP:journals/ml/AriasK06}.
For example, one can measure the number and size
of queries posed by the learner. In this way, \emph{query} complexity classes could also
be defined~\cite{arias2004exact,KLOW18}.

\section{Learning DL Ontologies}

We provide some and examples and intuitions about the notions presented so far (Subsection~\ref{subsec:alg}).
Then, in Subsection~\ref{subsec:complexity}, we recall results  on learning DL ontologies
in the exact and PAC learning models. % and discuss them.

%In this section,
\subsection{An Example}\label{subsec:alg}

To illustrate the  ideas for learning DL ontologies
in the exact and the PAC learning models, we start by considering
the problem of exactly learning an ontology in a toy language %\Toy
that allows only concept inclusions of the form $A\sqsubseteq B$ with $A,B\in\NC$. % beconcept names in concept expressions. 

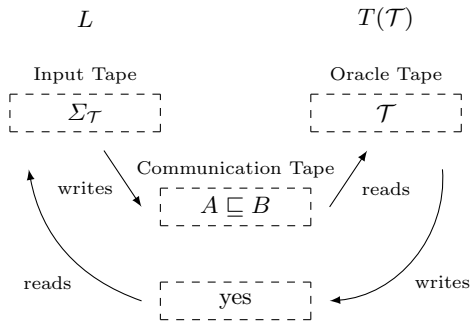
\begin{figure}[h]
%\hspace{-.26cm}
\centering
\begin{tikzpicture}
	%\begin{pgfonlayer}{nodelayer}
		\node [style=none] (0) at (-6, 3.25) {$\learnertm$};
		\node [style=none] (1) at (-2, 3.25) {$\teachertm(\Tmc)$};
		\node [style=none] (2) at (-6, 2.5) {\scriptsize{Input Tape}};
		\node [style=none] (3) at (-2, 2.5) {\scriptsize{Oracle Tape}};
		\node [style=none] (4) at (-7, 2.25) {};
		\node [style=none] (5) at (-5, 2.25) {};
		\node [style=none] (6) at (-7, 1.75) {};
		\node [style=none] (7) at (-5, 1.75) {};
		\node [style=none] (8) at (-6, 2) {$\Sigma_\Tmc$};
		\node [style=none] (9) at (-1, 2.25) {};
		\node [style=none] (10) at (-3, 1.75) {};
		\node [style=none] (11) at (-3, 2.25) {};
		\node [style=none] (12) at (-1, 1.75) {};
		\node [style=none] (13) at (-2, 2) {$\Tmc$};
		\node [style=none] (14) at (-3, 1) {};
		\node [style=none] (15) at (-5, 1) {};
		\node [style=none] (16) at (-5, 0.5) {};
		\node [style=none] (17) at (-3, 0.5) {};
		\node [style=none] (18) at (-4, 0.75) {$A \sqsubseteq B$};
		\node [style=none] (19) at (-5.75, 1.5) {};
		\node [style=none] (20) at (-5.25, 0.75) {};
		\node [style=none] (21) at (-2.75, 0.75) {};
		\node [style=none] (22) at (-2.25, 1.5) {};
		\node [style=none] (23) at (-6, 1) {\scriptsize{writes}};
		\node [style=none] (24) at (-2, 1) {\scriptsize{reads}};
		\node [style=none] (25) at (-5, -0.75) {};
		\node [style=none] (26) at (-4, -0.5) {yes};
		\node [style=none] (27) at (-6.75, 1.25) {};
		\node [style=none] (28) at (-1.25, -0.25) {\scriptsize{writes}};
		\node [style=none] (29) at (-1.25, 1.25) {};
		\node [style=none] (30) at (-5.25, -0.5) {};
		\node [style=none] (31) at (-3, -0.25) {};
		\node [style=none] (32) at (-3, -0.75) {};
		\node [style=none] (33) at (-6.5, -0.25) {\scriptsize{reads}};
		\node [style=none] (34) at (-2.75, -0.5) {};
		\node [style=none] (35) at (-5, -0.25) {};
		\node [style=none] (36) at (-4, 1.25) {\scriptsize{Communication Tape}};
	%\end{pgfonlayer}
%	\begin{pgfonlayer}{edgelayer}
		\draw [style=dashed] (6.center) to (7.center);
		\draw [style=dashed] (7.center) to (5.center);
		\draw [style=dashed] (5.center) to (4.center);
		\draw [style=dashed] (4.center) to (6.center);
		\draw [style=dashed] (10.center) to (12.center);
		\draw [style=dashed] (12.center) to (9.center);
		\draw [style=dashed] (9.center) to (11.center);
		\draw [style=dashed] (11.center) to (10.center);
		\draw [style=dashed] (16.center) to (17.center);
		\draw [style=dashed] (17.center) to (14.center);
		\draw [style=dashed] (14.center) to (15.center);
		\draw [style=dashed] (15.center) to (16.center);
		\draw [style=newstyle2] (19.center) to (20.center);
		\draw [style=newstyle2] (21.center) to (22.center);
		\draw [style=dashed] (25.center) to (32.center);
		\draw [style=dashed] (32.center) to (31.center);
		\draw [style=dashed] (31.center) to (35.center);
		\draw [style=dashed] (35.center) to (25.center);
		\draw [style=newstyle2, bend left, looseness=1.00] (30.center) to (27.center);
		\draw [style=newstyle2, bend left=45, looseness=1.00] (29.center) to (34.center);
%	\end{pgfonlayer}
\end{tikzpicture}
\caption{Membership Query in a Learning System}
\label{fig:learningsystem}
\end{figure}

Consider the learning framework $\Fmf_{\sf toy}=(\Lmc,\examples,\mu)$ with \Lmc and \examples being the set of all TBoxes
and the set of all CIs that
can be formulated in the toy language, respectively. The $\mu$ function maps
TBoxes \Tmc in \Lmc to CIs in $\examples$ entailed by \Tmc.  %\examples the set of all CIs 
%In this case, \Fmf is exactly learnable in polynomial time and
%a learner in this case 
%A learning system
Suppose the target $\Tmc\in\Lmc$ is $\{A\sqsubseteq B,B\sqsubseteq C\}$ and
let $(\learnertm,\teachertm(\Tmc))$ be a learning system
such that on the input % $(\le)$.
%We then have that
$\Sigma_\Tmc=\{A,B,C\}$ (the signature of \Tmc) returns  $\Hmc\equiv\Tmc$.
In symbols, $\Hmc\in (\learnertm,\teachertm(\Tmc))(\Sigma_\Tmc)$.
Clearly, for all $\Tmc'\in\Lmc$, there is a terminating $\teachertm(\Tmc')$
for membership, equivalence, and sample queries. 
%
%.

Figure~\ref{fig:learningsystem} illustrates part of a computation of 
%We now provide a high-level description of 
%a   computation of
$(\learnertm,\teachertm(\Tmc))$ on
the input $\Sigma_\Tmc=\{A,B,C\}$ where
$\learnertm$ poses the membership query $A\sqsubseteq B\in\mu(\Tmc)$
and receives `yes' as an answer.
A simple strategy for $\learnertm$ is to formulate all
CIs within $\Sigma_\Tmc$ and pose membership queries
with each such CI, one at a time. The CI is added to \Hmc
if, and only if,  the answer is
`yes'. With this strategy, the hypothesis \Hmc computed
by the learner is $\{A\sqsubseteq B,A\sqsubseteq C,B\sqsubseteq C\}$.
At most $|\Sigma_\Tmc|^2$ membership queries are
needed. Thus, $\Fmf_{\sf toy}\in\PTimeL({\sf MQ})$.

\subsubsection{Adding Conjunctions}
Now, consider an extension of the toy language that allows conjunctions of concept names
in CIs. We denote the underlying learning framework as $\Fmf^{\sqcap}_{\sf toy}$.
In this case, the strategy of posing membership queries
for each possible CI formulated within the signature $\Sigma_\Tmc$
of a target \Tmc still terminates (since $\Sigma_\Tmc$ is finite).
However, its does not terminate in polynomial time
in  $|\Sigma_\Tmc|$ because with conjunctions one can
formulate an exponential number of CIs.

In the following, we provide a simple argument showing that
there is no strategy that guarantees polynomial time learnability with only membership queries.
In other words,  $\Fmf^{\sqcap}_{\sf toy}\not\in\PTimeL({\sf MQ})$. %$\XL({\sf MQ},{\sf EQ})$ 
%continues the computation after receiving the answer, and
%and $B\sqsubseteq A\in\mu(\Tmc)$
%eventually terminates after $\teachertm(\Tmc)$ replies
%``yes'' to an equivalence query with \Hmc as input. %  could
%The learner

The main idea   is to
 define a superpolynomial set $S$ of
 TBoxes in this extension of the toy language
 %propositional Horn formulae
 and show that any membership query 
can distinguish at most polynomially many elements of $S$. 
Let $\Sigma = \{A_1,\ldots,A_n,\overline{A}_1,\ldots, \overline{A}_n,M\}$. 
For any sequence $\sma = \sigma^1 \ldots \sigma^n$ %of length $n$
with $\sigma^i \in \{A_i,\overline{A}_i\}$ the expression $\sma \sqsubseteq M$ 
stands for the CI %propositional Horn clause  
%\footnote{To
%simplify the notation we use the equivalent representation of Horn clauses as implications.} 
$(\sigma^1 \sqcap \ldots \sqcap   \sigma^n\sqsubseteq M)$. 
%Denote by \Lmc the set of all  sequences \sma, of which there are 
%$N = 2^n$ many. 
For every such sequence $\sma$ (of which there are 
$2^n$ many), consider the
TBox
%set of
%propositional 
%Horn clauses
$\Tmc_\sma$ defined as: 
$$
\begin{array}{rcl}
\Tmc_\sma &=& \left\{
\sma \sqsubseteq M
\right\}\cup\Tmc_0 \; \text{ with }\\[1mm]
 \Tmc_0 &=& \left\{A_i \sqcap \overline{A}_i \sqsubseteq M \mid 1\leq i \leq n\right\}
\end{array}
$$
 
%Intuitively,
The CI $\sma \sqsubseteq M$ represents a unique binary sequence for each 
$\Tmc_\sma$,   `marked' by the concept name $M$. The CIs 
in  $\Tmc_0$ are shared by all $\Tmc_\sma$ in $S$. 
 %~ 

\begin{lemma}
For any CI $\alpha$ in the extended toy language over $\Sigma$ either:
\begin{itemize}
\item for every $\Tmc_\sma \in S$, we have $\Tmc_\sma\models \alpha$; or 
\item $\Tmc_\sma\models \alpha$, for at most one $\Tmc_\sma \in S$.
\end{itemize}
\end{lemma}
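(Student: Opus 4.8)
The plan is to reduce everything to a characterisation of when $\Tmc_\sma \models C \sqsubseteq M$, observing that this is the only ingredient of the entailment condition that actually depends on $\sma$. First I would write $\alpha$ as $C \sqsubseteq D$ and let $P, Q \subseteq \Sigma$ be the sets of concept names occurring as conjuncts in $C$ and $D$ (treating $\top$ as the empty conjunction). Since $D^\Imc = \bigcap_{Y \in Q} Y^\Imc$ in every interpretation, $\Tmc_\sma \models C \sqsubseteq D$ holds iff $\Tmc_\sma \models C \sqsubseteq Y$ for every $Y \in Q$, so it suffices to analyse entailments with a single concept name on the right-hand side.

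For $Y \neq M$ I would use the one-point interpretation $\Imc$ with $\Delta^\Imc = \{d\}$ in which exactly the names of $P$ together with $M$ are true: it satisfies $\Tmc_\sma$ because every CI of $\Tmc_\sma$ has $M$ on the right and $M^\Imc = \Delta^\Imc$, and it shows $\Tmc_\sma \models C \sqsubseteq Y$ iff $Y \in P$, a condition independent of $\sma$. Hence $\Tmc_\sma \models \alpha$ already forces the $\sma$-independent condition $Q \setminus \{M\} \subseteq P$; if that fails then no $\Tmc_\sma$ entails $\alpha$, and if it holds but $M \notin Q$ then $C \sqsubseteq D$ is a tautology and every $\Tmc_\sma$ entails $\alpha$ --- in both subcases the dichotomy is immediate.

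The remaining case is $M \in Q$ with $Q \setminus \{M\} \subseteq P$, where $\Tmc_\sma \models \alpha$ iff $\Tmc_\sma \models C \sqsubseteq M$. Here I would prove that $\Tmc_\sma \models C \sqsubseteq M$ holds precisely when (i) $M \in P$, or (ii) $\{A_i,\overline{A}_i\} \subseteq P$ for some $i$, or (iii) $\{\sigma^1,\dots,\sigma^n\} \subseteq P$, writing $\sma = \sigma^1 \dots \sigma^n$. The ``if'' direction is routine: in each case a subconjunction of $C$ is forced into $M$ in every model of $\Tmc_\sma$ --- trivially for (i), by an axiom of $\Tmc_0$ for (ii), by $\sma \sqsubseteq M$ for (iii). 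For ``only if'', assuming (i)--(iii) all fail, I would take the one-point interpretation making exactly the names of $P$ true (so $M$ is false, since (i) fails) and check it models $\Tmc_\sma$: every $A_i \sqcap \overline{A}_i \sqsubseteq M$ holds because (ii) fails (one of the two conjuncts is empty), and $\sma \sqsubseteq M$ holds because (iii) fails (some $\sigma^j \notin P$, so the left-hand side is empty), while the point $d$ lies in $C^\Imc \setminus M^\Imc$.

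To conclude, I would note that conditions (i) and (ii) do not mention $\sma$, so if either of them holds then all $\Tmc_\sma$ entail $\alpha$ (first alternative of the lemma); and if neither holds then $M \notin P$ and $P$ meets each pair $\{A_i,\overline{A}_i\}$ at most once, so $|P| \le n$, and (iii) --- which demands $n$ distinct literals, one per index, all inside $P$ --- can be satisfied by at most one sequence $\sma$ (namely only if $P$ contains exactly one literal of each index, in which case $\sma$ is determined). Hence at most one $\Tmc_\sma$ entails $\alpha$, which is the second alternative. I expect the only slightly fiddly point to be this last bookkeeping step, isolating the $\sma$-dependent condition (iii) from the $\sma$-independent ones and verifying that (iii) pins $\sma$ down uniquely; the rest amounts to two short one-point model constructions.
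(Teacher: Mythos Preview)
Your proof is correct and follows essentially the same strategy as the paper: reduce to the case where the right-hand side is $M$, then case-split on the structure of $C$ according to whether it contains a complementary pair $\{A_i,\overline{A}_i\}$ (or $M$ itself) versus a set of literals that could match at most one $\sma$. Your version is in fact more careful than the paper's---you explicitly decompose a conjunctive right-hand side $D$ into single concept names, you handle the edge case $M \in P$ (which the paper's case analysis silently omits), and you justify the key entailment claims with concrete one-point countermodels rather than leaving them implicit.
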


\begin{proof}
Suppose there is $\Tmc_\sma \in S$ such that $\Tmc_\sma\models \alpha$ (otherwise we are done). 
Assume the CI $\alpha$ is $C\sqsubseteq D$.
If $D$ is %${\sf con}(c)\neq
$M$ then either $\alpha$ is a tautology or 
there is no $\Tmc_\sma \in S$ such that $\Tmc_\sma\models \alpha$. 
In both cases, we have that $\Tmc_\sma\models \alpha$ for at most one $\Tmc_\sma \in S$. 
Then, we can assume that $D$ is $M$. %${\sf con}(c)=m$. 
Regarding   $C$  (the concept on the left side of the CI $\alpha$), %${\sf ant}(c)$
we make a case distinction:
\begin{itemize}
\item there is $1 \leq i \leq n$ such that $A_i,\overline{A}_i$ are conjuncts in $C$. In this case, 
by definition of $\Tmc_0$, we have that $\Tmc_\sma\models \alpha$, 
for every $\Tmc_\sma \in S$. 
\item there is no $1 \leq i \leq n$ such that $A_i,\overline{A}_i$ are conjuncts in $C$. % \in {\sf ant}(c)$. 
This means that $\Tmc_0\not\models \alpha$.  
If $\alpha$ is of the form $\sma \sqsubseteq M$ then there is exactly 
one $\Tmc_\sma \in S$ such that $\Tmc_\sma\models \alpha$. 
Otherwise, 
there is no $\Tmc_\sma \in S$ 
such that $\Tmc_\sma\models \alpha$.  So, 
$\Tmc_\sma\models \alpha$, for at most one $\Tmc_\sma \in S$. 
\end{itemize}
\end{proof}

Since any %polynomial size
membership query  can eliminate  only polynomially many 
 elements from $S$ (in our case at most one), 
%we have that after polynomially many queries
the learner cannot 
distinguish between the remaining elements from our initial superpolynomial set $S$ in polynomial time. 
Thus, $\Fmf^{\sqcap}_{\sf toy}\not\in\PTimeL({\sf MQ})$.
%As hardness in the query learnability implies hardness in time learnability, 
%we obtain that propositional Horn is not polynomial time exact learnable 
%with membership queries only.
%
%\begin{theorem}
%$\Fmf^{\sqcap}_{\sf toy}\not\in\PTimeL({\sf MQ})$.
%The learning framework $\Fmf^\Emc_{\sf H}$ is not 
%polynomial time exact learnable with membership queries only. 
%\end{theorem}
%

This language can be easily translated into propositional Horn.  
%by mapping concept names into propositional variables
It is known that %a polynomial time algorithm exists for exactly learning
propositional Horn expressions are exactly learnable in polynomial time %exists
if equivalence queries are also allowed~\cite{DBLP:journals/ml/AngluinFP92,DBLP:conf/icml/FrazierP93}. That is, $\Fmf^{\sqcap}_{\sf toy}\in\PTimeL({\sf MQ}, {\sf EQ})$.

\subsubsection{Adding Existentials}
We discuss here a further extension the toy language that also allows existential quantification.
This language  coincides with \EL, defined in Section~\ref{sec:el}. 
Our first observation is that in \EL there is an infinite number of
CIs that can be formulated with a finite signature $\Sigma_\Tmc$ of a target \Tmc.
This happens because existential quantifiers can be nested in concept expressions.
Moreover, due to cyclic references between concepts in an \EL TBox,
an infinite number of CIs can be entailed by a (finite) TBox (see Remark~1).
This means that the strategy of posing membership queries
for each possible CI formulated with the signature $\Sigma_\Tmc$
of a target \Tmc does not terminate in this case.
If equivalence queries are allowed then
one can still enumerate all TBoxes
of size $n$ that can be formulated with  $\Sigma_\Tmc$ (up to logical equivalence)
and ask equivalence queries with such TBoxes, one by one. Then one can increase $n$ until
it reaches the size of \Tmc (which is finite).
This strategy is guaranteed to terminate, although not in polynomial time. 
In the next subsection, we discuss further results for \EL extended with role inclusions (that is, \ELH)
and its fragments $\ELHlhs$ and $\ELHrhs$, introduced in Section~\ref{sec:el}.
%\ref{remark}). 
%Now,
%So far we have considered toy languages that
%can be seen as fragments
%of propositional 
%Suppose 
  
%In symbols, $\Hmc\in (\learnertm,\teachertm)(\Sigma_\Tmc)$.  % is $$. % and this is given as input.  

\subsection{Complexity Results}\label{subsec:complexity}

We now recall from the literature polynomial time complexity results
for learning  DL ontologies
in the exact and the PAC learning models.
Figure~\ref{fig:classes}
illustrates some of these results (some results and complexity classes have been omitted to simplify the presentation). 
 Dashed lines are for the classes associated with the PAC learning model.  
In what follows, we give an overview of the complexity results
and provide additional explanations for the complexity classes.

\begin{figure}[h]
%\hspace{-.26cm}
\centering
\includegraphics[width=300pt]{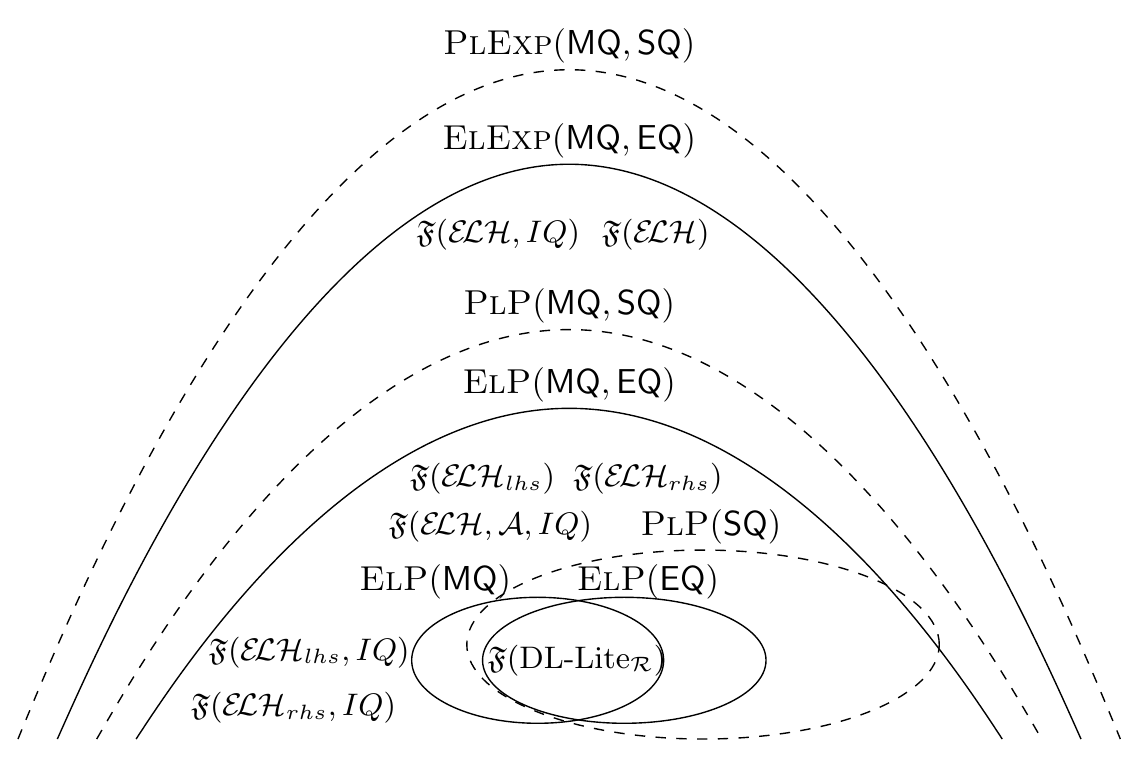}
\caption{Learning Frameworks and Complexity Classes}
\label{fig:classes}
\end{figure}

Konev et al. (2018) have shown that
%the
%learning framework
$\ELH$ (in fact already \EL)
TBoxes are not exactly learnable from entailments in polynomial time
while $\ELHlhs$ and $\ELHrhs$ are polynomially learnable~\cite{KLOW18}.
%while
In symbols, $\Fmf(\ELH)\not\in \PTimeL({\sf MQ},{\sf EQ})$
but $\Fmf(\ELHlhs),\Fmf(\ELHrhs)\in \PTimeL({\sf MQ},{\sf EQ})$. 
Similar results also hold
for a variant of this problem setting where
 the examples are pairs of the form $(\Amc,q)$ (instead of being CIs and RIs),
where \Amc is an ABox and $q$ is an (\ELH) IQ~\cite{DBLP:conf/aaai/KonevOW16}.
In this setting,   $(\Amc,q)$
is a positive example for \Tmc iff $(\Tmc,\Amc)\models q$.
We denote these learning frameworks with $\Fmf(\Lmf,IQ)$,
where \Lmf is the DL. % and \Qmf is the query language.  
%We point out that in both cases
In both problem settings, if the return of an equivalence query
is `yes' then this means that the hypothesis of the learner
and the target are logically equivalent.
Recently, it has been shown that if the ABox is fixed
and one only aims at preserving IQ results w.r.t. the fixed ABox
(not  logical equivalence between the hypothesis and the target)
then there is a polynomial time algorithm for \ELH terminologies~\cite{DBLP:journals/corr/abs-1911-07229}. %learnability can be regained
We denote this learning framework by $\Fmf(\ELH,\Amc,IQ)$ where
$\Amc$ is the fixed ABox. The intuition for why the problem is `easier'
in this case is because, since the ABox is fixed, the possible counterexamples the teacher can give
are constrained.
%In particular, all assertions that can be entailed w.r.t. the
%fixed ABox can be quickly learned.
The fixed ABox setting avoids the difficult scenario described
in the hardness proof for the learning framework $\Fmf(\ELH,IQ)$~\cite[Page 29 of the full version]{DBLP:conf/aaai/KonevOW16}, where the
teacher can give counterexamples of the form $(\Amc_{\sma},A(a))$,
with $\sma = \sigma^1,\ldots, \sigma^n$, for $\sigma^i\in\{A_i,\overline{A_i}\}$, and
$\Amc_{\sma}$  an ABox
of the form  $\{\sigma^1(a),\ldots,\sigma^n(a)\}$.
%Then, it remains to learn
%IQs with existential quantifiers.
%

By Theorem~\ref{thm:exact-pac},  positive results in the exact learning model are transferable
to the PAC model extended with membership queries.
We point out that the complexity class $\PTimePL({\sf SQ})$ is \emph{not}
contained in $\PTimeL({\sf MQ},{\sf EQ})$. This has been
discovered by Blum in 1994~\cite{Blum:1994:SDM:196751.196815}.
He constructed an artificial counterexample to prove the result and the argument
relies on cryptographic assumptions.
Another (artificial) counterexample appears in the work by Ozaki et al. (2020)~\cite{DBLP:journals/corr/abs-1911-07229}.
The argument in this case does not rely on cryptographic assumptions.
Apart from these carefully constructed learning frameworks, in many cases,
learning frameworks in $\PTimePL({\sf SQ})$ are also in 
$\PTimeL({\sf MQ},{\sf EQ})$.

We now explain why  $\Fmf(\ELH)$ and $\Fmf(\ELH,IQ)$ appear
in $\ExpTimeL({\sf MQ},{\sf EQ})$. In fact,
they are already  in $\ExpTimeL({\sf EQ})$.
The reason is that, as explained at the end of Subsection~\ref{subsec:alg},
 since %by assumption
the learning system  receives the (finite) signature $\Sigma_\Tmc$
of the target \Tmc as input,
 it can enumerate all TBoxes (up to logical equivalence)
of a certain size
%of size $k$ formulated in $\Sigma_\Tmc$
%Subsection~\ref{subsec:alg}
and ask whether any of them is equivalent to  \Tmc, one by one, increasing
this size until a TBox equivalent to \Tmc is found. 
%it reaches .
%Then it can increase $k$ step by step until it reaches
%the size $n$ of \Tmc.
This naive procedure clearly requires an
exponential number of steps in the size of \Tmc.
% and is guaranteed
%to terminate with a hypothesis  equivalent to \Tmc.
The same
holds for other DL languages more expressive
than \ELH, as long as TBoxes can also be enumerated in this way. 
An exponential (but non-trivial) algorithm for \EL terminologies and its implementation
   is provided by Duarte et al. (2018)~\cite{DBLP:conf/kr/DuarteKO18}

%\nb{explain DLlite}
It remains to explain the $\Fmf(\text{DL-Lite}_\Rmc)$ case. % of $\Fmf()$
%We did not introduce
$\text{DL-Lite}_{\Rmc}$ %(or $\text{DL-Lite}_{core}$) %in Section~\ref{sec:el}
%but it is a very simple language from
is a member of a well-known family of DLs~\cite{DBLP:journals/corr/ArtaleCKZ14}.
What we would like to explain is that, for some ontology languages,
such as $\text{DL-Lite}_{\Rmc}$,
the number of RIs and CIs that can
be formulated within the  (finite) signature $\Sigma_\Tmc$ of a target \Tmc
is polynomial in the size of $\Sigma_\Tmc$.
Since $\Sigma_\Tmc$ is given as input to the learning system,
this means that the learner can identify the target
with only membership queries (see  toy example in Subsection~\ref{subsec:alg}), and moreover, 
in polynomial
time in the size of $\Sigma_\Tmc$.
Therefore, $\Fmf(\text{DL-Lite}_\Rmc)$ belongs to $\PTimeL({\sf MQ})$.
Learning an equivalent $\text{DL-Lite}_{\Rmc}$ TBox with only
equivalence queries is also easy.
%In this case, observe that
This happens because there are only polynomially many
counterexamples that can be given.
The learner can start by posing an equivalence
query with an empty hypothesis.
Then, the teacher is obliged to return a positive counterexample
(unless the target
is  equivalent to the empty hypothesis and we are done).
All the learner needs to do is to add this positive counterexample to
its hypothesis and then proceed by posing another equivalence query.
After polynomially many equivalence queries, the learner
will terminate with an equivalent hypothesis.
So $\Fmf(\text{DL-Lite}_\Rmc)\in\PTimeL({\sf EQ})$ also holds.
% (or a hypothesis containing)
% by entering in the communication tape
%each
%asking whether
%each CI
%or RI that formulated  

% \nb{explain why in exp}
%have been transferred to the
%case in which 

%\nb{... \ALCH~\cite{dlhandbook}}

%\nb{remember ELH paper}

%\nb{naive algorithm?}

%(complexity classes for the PAC model are dashed). 

\section{Related Work}

%In this section,
We now highlight 
some other approaches from the literature for learning DL ontologies,
when the focus is on finding how terms of an ontology
should relate to each other using the expressivity of the ontology language at hand. %/logical constructs available.
These approaches are mainly based on association rule mining, formal concept analysis,
inductive logic programming, and neural networks~\cite{anasurvey}.
%based on formal concept analysis
%that was applied to mine \EL CIs.

%\subsection{Mining DL Ontologies}

%In this subsection, we recall from the literature
%an approach based on
Formal concept analysis~\cite{FCA} has been applied to mine \EL CIs~\cite{BorDi11} (see also~\cite{Rudolph04exploringrelational,baader2007completing,DBLP:conf/icfca/BaaderD09}).
In this setting,
a learner receives a finite intepretation \Imc as input
and attempts to build a finite ontology \Tmc such that,
for all \EL CIs formulated in a finite signature, % $\Sigma$,
%\begin{itemize}
%\item
$\Tmc\models C\sqsubseteq D$ if, and only if, $\Imc\models C\sqsubseteq D$. 
%\end{itemize}
This ontology, called \emph{base}, should also satisfy certain minimality conditions.
It is known that, given a finite interpretation \Imc,
a finite base (expressed within a finite signature) always exists for the \EL ontology language. % a finite base  
However, this may not be the case for other ontology languages.
%Other works that apply formal concept analysis for building DL ontologies 
%\nb{cite ...}
%
The main difficulty in applying formal concept analysis for
building ontologies is that, as originally proposed,
it cannot build CIs one may expect to hold when the data is (even just slightly) incorrect.
If a certain CI holds in practice but there is an element that
violates it in the interpretation then the CI will not be included
in the base. One could argue that this method is then useful to find such errors but
the application for mining CIs has this issue.

When there is a certain threshold for tolerating errors, then association rule
mining offers an interesting solution. This method is based on the measures of
support and confidence~\cite{agrawal1993mining}. The support is a metric for measuring statistical significance,
while confidence measures the ‘strength’ of a rule, in this case, expressed
as a CI in an ontology language. Many authors have already employed this method
for building DL ontologies~\cite{volker2011statistical,DBLP:conf/otm/FleischhackerVS12,DBLP:journals/ws/VolkerFS15}
(see also~\cite{DBLP:conf/semweb/SazonauS17}) and for finding relational rules
 in knowledge 
graphs~\cite{DBLP:journals/vldb/GalarragaTHS15}.
The usual approach %and also a downside of it
is to fix the depth of the CIs in order to restrict the search space. % (often to one)

There is a vast literature on algorithms and techniques for learning DL \emph{concepts} based on inductive logic programming~\cite{DBLP:conf/ijcai/FunkJLPW19,DBLP:conf/ilp/FanizzidE08,DBLP:journals/apin/IannonePF07,lehmann2009dl,lehmann2009ideal,
DBLP:journals/ml/LehmannH10,lehmann2010learning}(see~\cite{DBLP:journals/ijswis/Lisi11} for learning logical rules also
based in inductive logic programming).
One of the most well known tools for supporting the construction of DL concepts is the DL-Learner~\cite{lehmann2009dl}.
In this approach, the learner receives as input examples of assertions classified
as positive and negative and the goal is to construct a DL concept expression
that `fits' the classified examples. 

Deep learning has also been applied for learning DL ontologies~\cite{Petrucci:2016:OLD:3092960.3092992}.
In the mentioned work, the authors use definitional sentences labelled with
their DL translation to train a recurrent neural network (see also~\cite{DBLP:conf/aime/MaD13} for 
more work on definitional sentences in a DL context). It is an interesting
approach that deals extremely well with data variability.
The main difficulties pointed out by the authors are how to find large amounts of
classified examples to train the neural network (the authors have trained it using
synthetic data) and how to capture the semantics of the ontology. The
neural network could capture the syntax, for example, map the word `and' to the logical
operator  `$\sqcap$'. However, as reported by the authors,
the method does not really capture the semantics of the sentences and how they relate to each other. 
There is an extensive literature on learning assertions using neural networks~\cite{DBLP:conf/nips/BordesUGWY13,DBLP:journals/corr/YangYHGD14a} but not many works on
building DL ontologies with complex concept expressions. % yet. 
% (see also~\cite{DBLP:conf/aime/MaD13} for 
%more work on definitorial sentences in a DL context, 
%and, e.g.~\cite{DBLP:conf/nips/BordesUGWY13,DBLP:journals/corr/YangYHGD14a}, for work on learning assertions based on NNs).

% (on the other hand this ).  
%\subsection{Learning DL Ontologies in the Deep}

\section{Conclusion} %Future Directions}

We have presented a formalisation of the exact and the PAC learning
models and defined learning complexity classes.
This opens the possibility of investigating other
questions such as the problem of deciding whether a learning framework
is PAC or exactly learnable. 
%It would be interesting to investigate the 
%problem of deciding whether a learning framework
%is PAC (or exactly) learnable. 
%
Some authors
%Venkatraman et al. have shown
have already investigated 
 this problem for the PAC model,
with a different formalisation of PAC learnability~\cite{DBLP:journals/corr/abs-1808-06324}.

An interesting application of exact learning algorithms is 
to verify neural networks, as in the already mentioned works by Weiss et al.~\cite{DBLP:conf/nips/WeissGY19,DBLP:conf/icml/WeissGY18}. %\nb{cite}
These works are based on Angluin's exact learning algorithm for
learning regular languages represented by deterministic finite automata
with membership and equivalence queries (an abstraction of the automata is used to find counterexamples). 
%Exact learning algorithms
%traverse the search space in a systematic way,
%traverse the search to identify the
%behaviour
%target by posing queries to the oracle, in this case, a neural network.
%Since the exact learning model is based on queries, it does not depend
%on the internal structure of the oracle. This means that the strategy can be applied
% to any kind of neural network architecture
% (and even to architectures not based on neural networks, such as the recent).
One of the goals of this strategy is to %This application is effective in  
find \emph{adversarial inputs}: examples neither present
in the training nor in the test set which were misclassified
by a neural network~\cite{DBLP:conf/nips/WeissGY19,DBLP:conf/icml/WeissGY18}. 
It would be interesting to investigate whether algorithms for exactly
learning ontologies can also be applied to verify if a neural network
captures certain rules. %nd constraints. %/properties. 

%The mentioned work is based on Angluin's exact learning algorithm for
%learning regular languages represented by deterministic finite automata.
%with membership and equivalence queries. 
%In particular, %the exact learning model
%it has shown promising results
% that this
%model can be applied to find
%for

%Capturing the weights of concepts learned with a neural network.

% \section*{Acknowledgements}
% Ana Ozaki is supported by the University of Bergen.
%~ The author thanks Prof. Sebastian
%~ Rudolph %for a discussion about FCA and
%~ for proposing Example \nb{add}.
%\section{Noisy Classification}
 
% ---- Bibliography ----
%
% BibTeX users should specify bibliography style 'splncs04'.
% References will then be sorted and formatted in the correct style.
%
% \bibliographystyle{splncs04}
% \bibliography{mybibliography}
%
\bibliographystyle{spmpsci}      % mathematics and physical sciences
%\bibliographystyle{spphys}       % APS-like style for physics
%\bibliography{}   % name your BibTeX data base

%\bibliographystyle{splncs03}
\bibliography{references}

\end{document}